\documentclass{article} 
\usepackage{iclr2027_conference,times}

\usepackage{amsmath,amsfonts,bm}

\def\eqref#1{equation~\ref{#1}}

\def\1{\bm{1}}

\def\vk{{\bm{k}}}

\def\vq{{\bm{q}}}

\def\vv{{\bm{v}}}

\DeclareMathAlphabet{\mathsfit}{\encodingdefault}{\sfdefault}{m}{sl}
\SetMathAlphabet{\mathsfit}{bold}{\encodingdefault}{\sfdefault}{bx}{n}

\usepackage{hyperref}
\usepackage{url}
\usepackage[utf8]{inputenc}
\usepackage[T1]{fontenc}
\usepackage{booktabs}
\usepackage{amsfonts}
\usepackage{amsmath}
\usepackage{amssymb}
\usepackage{amsthm}
\usepackage{nicefrac}
\usepackage{microtype}
\usepackage[table]{xcolor}
\usepackage{graphicx}
\usepackage{wrapfig}
\usepackage{algorithm}
\usepackage{algpseudocode}
\usepackage{multirow}
\usepackage{makecell}
\usepackage{longtable}

\newtheorem{theorem}{Theorem}
\newtheorem{corollary}{Corollary}
\theoremstyle{definition}
\newtheorem{definition}{Definition}
\theoremstyle{remark}
\newtheorem{remark}{Remark}

\newcommand{\Rot}[1]{\mathbf{R}(#1)}
\renewcommand{\vq}{\mathbf{q}}
\renewcommand{\vk}{\mathbf{k}}
\renewcommand{\vv}{\mathbf{v}}
\DeclareMathOperator{\geomspace}{geomspace}
\definecolor{relgreen}{RGB}{0,128,0}
\newcommand{\rel}[1]{{\fontsize{6}{7}\selectfont\textcolor{relgreen}{\textbf{+#1\%}}}}

\title{T-RoPE: Time-Aware Rotary Position Embedding for Sequential Recommendation}

\author{
\normalfont\textbf{Yang Liu\textsuperscript{1} \quad
Noel Loo\textsuperscript{2,3} \quad
Ali Khanafer\textsuperscript{1} \quad
Shuying Sun\textsuperscript{1}\textsuperscript{\textdagger} \quad
Akshay Soni\textsuperscript{1}\textsuperscript{\textdagger}} \\
\textbf{Zhong Wu\textsuperscript{1} \quad
Linjun Yang\textsuperscript{1}} \\
\textsuperscript{1}Shopify \qquad
\textsuperscript{2}Massachusetts Institute of Technology \qquad
\textsuperscript{3}Liquid AI
}

\iclrfinalcopy 
\begin{document}

\maketitle
\lhead{Preprint}
\begingroup
\renewcommand{\thefootnote}{}
\footnotetext{\textsuperscript{\textdagger}\textbf{Corresponding authors:}
\texttt{shuying.sun@shopify.com} and
\texttt{akshay.soni@shopify.com}.}
\endgroup

\begin{abstract}
Large-scale recommenders increasingly adopt the sequential generative recipe behind large language models, bringing the Transformer into recommendation along with design choices made for text, including Rotary Position Embedding (RoPE). In language models, RoPE encodes token indices for relative position reasoning, but in recommendation, an interaction index records only event order, saying nothing about elapsed time, behavioral cycles across scales, or calendar phase. We revisit this choice and propose T-RoPE, a time-aware RoPE for sequential generative recommendation that replaces index-only rotation with timestamp-based angles, learnable temporal coefficients, multiscale frequency banks, shifted query alignment, and non-stationary key rotation. We prove that standard RoPE, even on timestamps, remains time-translation invariant and cannot distinguish seasonal contexts, and that T-RoPE breaks this invariance while preserving the RoPE interface. Across five public benchmarks, T-RoPE achieves the best result on every metric on every dataset, improving over the strongest baseline by 78--130\% in HR@10 on the sparse PixelRec data and 8--12\% across metrics on Amazon Books. On an industrial-scale e-commerce dataset with more than 6B interactions, it improves every metric over the HSTU + Time RAB backbone by 13--82\%, with ablations attributing the largest gains to multiscale frequencies ($+56\%$ NDCG@50) and non-stationary keys ($+4\%$). An online A/B test in the Shop app yields positive lifts in conversion rate ($+0.33\%$) and order count ($+0.63\%$). We also provide forward and backward algorithms whose added cost is linear in sequence length and head dimension, keeping time-aware RoPE practical for large generative recommenders.
\end{abstract}

\section{Introduction}
\label{sec:intro}

Large language models have made autoregressive sequence modeling a practical way to
learn from long, heterogeneous histories~\citep{vaswani2017attention,touvron2023llama}.
Their success comes not from the architecture alone but from a mature engineering stack of scalable training, fast attention kernels, and long-context modeling that makes generative prediction practical at scale.
Recommendation models are now borrowing this recipe. Recent sequential generative
recommenders represent user behavior as action sequences and autoregressively predict future items,
actions, or semantic identifiers
~\citep{rajput2023tiger,wang2024letter,chen2024hllm,hou2025actionpiece,onereason2026,liang2026gr2}.
This recipe has already proven itself at production scale, where HSTU drives live recommendation systems on industrial workloads~\citep{zhai2024hstu}.

This transfer is productive but imperfect, because inductive biases that serve language do not always carry over to recommendation. Generative recommenders inherit
components from language models, including RoPE~\citep{su2021roformer,tian2026mrrope,liu2025ropepp,movahedi2026selective}. RoPE injects sequence order by rotating query and key
vectors according to an integer position index. For text, this is a natural coordinate, because
token offsets carry much of the structure attention needs.  For recommendation, the same coordinate loses what matters. An interaction index records that one event preceded another, but not whether the gap was seconds, weeks, or seasons.

This positional encoding gap leaves three temporal signals unrepresented. First, \emph{true time
gaps} are erased. A key six months in the past can be positionally identical to a key
from yesterday if their sequence offsets match. Second, \emph{multiscale periodicity}
is absent. User behavior contains hourly, weekly, monthly, and seasonal rhythms, but
index-based RoPE has no wall-clock frequency structure. Third, \emph{absolute calendar phase}
is inaccessible. Because index-based attention depends only on the relative offset
$m - n$, it is time-translation invariant. The same history evaluated at different
points in a seasonal cycle produces the same attention pattern.

\begin{table}[t]
\vspace{-10pt}
\caption{Temporal mechanisms explicitly provided by positional encoding methods.} 
\label{tab:comparison}
\centering
\small
\begin{tabular}{lccccc}
\toprule
Method & Timestamp & Learnable & Multiscale & Shifted & Non-stationary \\
\midrule
Standard RoPE~\citep{su2021roformer} & \texttimes & \texttimes & \texttimes & \texttimes & \texttimes \\
TiSASRec~\citep{li2020time}          & Partial    & \texttimes & \texttimes & \texttimes & \texttimes \\
TO-RoPE~\citep{wei2025torope}        & \checkmark & Partial    & \checkmark & \texttimes & \texttimes \\
\textbf{T-RoPE (ours)}               & \checkmark & \checkmark & \checkmark & \checkmark & \checkmark \\
\bottomrule
\end{tabular}
\par\begin{minipage}{0.98\linewidth}
\scriptsize
Multiscale denotes an explicit continuous frequency bank covering multiple periods.
Learnable denotes learned temporal coefficients inside the encoding (not generic model
parameters). Partial denotes a limited or indirect version of the property.
\end{minipage}
\end{table}


Existing temporal recommendation methods close only part of this gap, as Table~\ref{tab:comparison} summarizes. TiSASRec injects
discretized time-interval features into self-attention, which captures recency but not
continuous multiscale periodicity or calendar phase~\citep{li2020time}. 
HSTU adds a time-based Relative Attention Bias (RAB), mapping bucketed time gaps to a learned scalar added to
the attention logits, which sharpens recency weighting but leaves the query and key
geometry untouched and still discretizes time~\citep{zhai2024hstu}.
TO-RoPE brings timestamps into RoPE angles, but the joint geometry of its queries and keys remains
time-translation invariant~\citep{wei2025torope}.  As we prove in Section~\ref{sec:theory}, it captures temporal distance but cannot distinguish queries issued in different calendar phases over the same history. Time cannot just sit alongside RoPE as an extra feature or bias, because closing the gap requires structurally changing the way RoPE itself rotates queries and keys.

To address this gap, we propose \textbf{T-RoPE} (\textbf{T}ime-aware \textbf{RoPE}), a RoPE-compatible
positional encoding for timestamped user histories. T-RoPE keeps the efficient RoPE
interface but replaces index-only rotation with timestamp-based angles, learnable
temporal coefficients, multiscale frequency banks, shifted query alignment, and
non-stationary key rotation, as shown in Table~\ref{tab:comparison}. We list our main contributions below.
\begin{enumerate}
    \item \textbf{A time-aware RoPE for generative recommendation.} T-RoPE encodes
    continuous timestamps directly in rotary attention to represent elapsed time,
    multiscale periodicity, and calendar phase within a single rotary encoding. Index-based
    RoPE discards these signals, and prior temporal encodings capture them only in part.

    \item \textbf{Theoretical analysis.} We prove that rotating queries and keys by the same
    function of their timestamps yields time-translation invariant attention (covering
    standard RoPE on timestamps and TO-RoPE's timestamp component), and that T-RoPE's
    non-stationary key provably breaks this invariance.

    \item \textbf{Results across settings.} On five public benchmarks, an industrial dataset
    with more than 6B interactions, and an online A/B test in the Shop app, T-RoPE improves
    over temporal and RoPE-based baselines on every public benchmark and delivers positive
    production lifts.

    \item \textbf{An efficient implementation path.} We provide forward and backward algorithms
    and a cost analysis showing only linear overhead in sequence length and head dimension. 
\end{enumerate}

\section{Preliminaries}
\label{sec:prelim}

In generative recommendation, the model input is a user's interaction history, a sequence of timestamped events $\{(i_1, t_1), \ldots, (i_L, t_L)\}$ ordered by time $t_1 < t_2 < \cdots < t_L$. Each event contains an item $i_m$ and a Unix timestamp $t_m \in \mathbb{R}_{>0}$ measured in seconds. It may also include an interaction type or other semantic identifiers. The model predicts the next event autoregressively, and we take the prediction target to be the next item $i_{L+1}$.

\begin{figure}[t]
    \centering
    \vspace{-10pt}
    \includegraphics[width=0.98\linewidth,trim={1 7 5 3},clip]{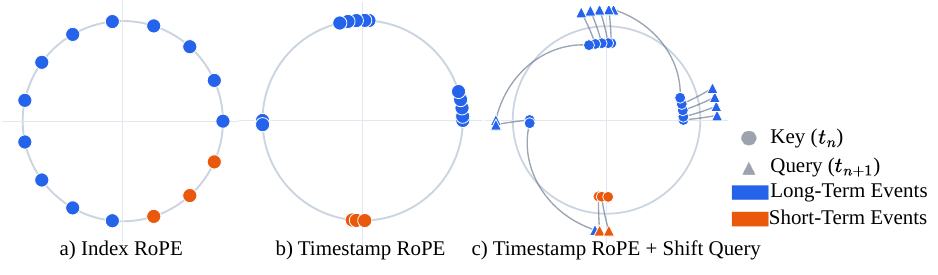}
    \vspace{-6pt}
    \caption{Geometric intuition for T-RoPE rotations. a) Index RoPE places events at uniform sequence positions, hiding elapsed time. b) Timestamp RoPE uses event time, so bursty events cluster and dormant gaps spread on the rotation circle. c) Shifted queries place each query at the next-event timestamp (markers displaced from keys), matching training to the inference-time geometry.}
    \label{fig:overview}
\end{figure}

A Transformer encoder maps each item $i_m$ to query, key, and value vectors $\vq_m, \vk_m, \vv_m \in \mathbb{R}^D$. Standard RoPE~\citep{su2021roformer} acts on $D/2$ 2D rotation planes. In plane $d \in \{1, \ldots, D/2\}$, the 2D slice $[\vq_{m,2d-1}, \vq_{m,2d}]^\top$ is multiplied by a rotation matrix
\begin{equation}
\Rot{\phi_m} = \begin{bmatrix} \cos\phi_m & -\sin\phi_m \\ \sin\phi_m & \cos\phi_m \end{bmatrix}.
\end{equation}
The resulting dot product between rotated query and key in plane $d$ is
\begin{equation}
(\vq'_{m,d})^\top \vk'_{n,d} = (\vq_{m,d})^\top \Rot{\phi_n - \phi_m} \vk_{n,d},
\end{equation}
which depends only on the angle difference $(\phi_n - \phi_m)$. Integer indices set $\phi_m = m\theta_d$, where $\theta_d$ is the fixed angular frequency of plane $d$. So the angle difference becomes $(n-m)\theta_d$ and the dot product depends only on the relative offset $n-m$.

\section{The T-RoPE Method}
\label{sec:method}

T-RoPE keeps RoPE's computational interface but changes the meaning of the rotation,
treating the user history as a timestamped event sequence rather than a text-like list.
The construction has five parts: real-time rotation, learned per-layer scale
coefficients, multiple temporal scales across dimensions, query alignment to the
prediction time, and a key rotation that breaks the relative-time symmetry hiding
calendar phase.

\subsection{Timestamp-Based Rotation}
\label{sec:timestamp}

Adjacent sequence positions are not equally spaced in time. In recommendation,
neighboring events can be seconds, days, or months apart, yet the index $m$ treats every
gap as an offset of one position (Figure~\ref{fig:overview}a). T-RoPE therefore
replaces the integer index with the absolute Unix timestamp $t_m$ in seconds, used
directly before rotation without centering or normalizing. As
Figure~\ref{fig:overview}b shows, events close in sequence index but far apart in time
no longer occupy neighboring phases, while bursty events stay clustered. For dimension
$d$, the angle is $\phi_{m,d} = t_m/\beta_d$, where $\beta_d$ is a base time frequency. Under this substitution the relative phase between two events becomes a function of
their true temporal gap,
\begin{equation}
(\vq'_{m,d})^\top \vk'_{n,d} \propto \cos\!\left(\frac{t_n - t_m}{\beta_d}\right),
\label{eq:timestamp_gap}
\end{equation}
so a large gap $|t_n - t_m|$ weakens attention even between adjacent events in the truncated history.

\subsection{Learnable Temporal Coefficients}
\label{sec:learnable}

Not all temporal scales deserve equal weight. Dominant cycles differ by domain and across
layers within one model. Grocery purchases may follow cycles of 7 or 30 days, fashion
demand may be seasonal, and marketplace activity may follow payday or promotion intervals.
T-RoPE learns the strength of each cycle rather than fixing it.

To learn these strengths, T-RoPE introduces learnable coefficients
$\alpha^q_d, \alpha^k_d \in \mathbb{R}$, separate for queries and keys,
\begin{equation}
\phi^q_{m,d} = \alpha^q_d \cdot \frac{t_m}{\beta_d}, \qquad
\phi^k_{n,d} = \alpha^k_d \cdot \frac{t_n}{\beta_d}.
\label{eq:learnable}
\end{equation}
Independent query and key coefficients permit asymmetric specialization, but in practice we find the opposite. Across most dimensions the learned query and key coefficients align in both sign and magnitude, as Figure~\ref{fig:multigranular}a shows, so both emphasize the same temporal bands. This alignment does not carry across layers, however. Figure~\ref{fig:layer_atten} shows the per-layer RMS coefficient magnitude varying widely (a), and the temporal attention on the same sequence shifting substantially across the selected layers (b), so each layer learns its own temporal sensitivity rather than a single shared bias.

\begin{figure}[t]
    \vspace{-15pt}
    \centering
    \includegraphics[width=0.95\linewidth,trim={7 2 3 5},clip]{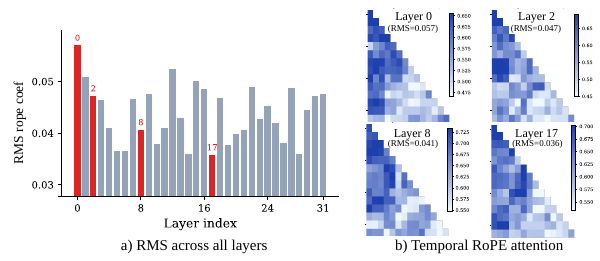}
    \vspace{-8pt}
    \caption{Layer-wise visualization of T-RoPE. a) Per-layer RMS magnitude of learned query coefficients, with the selected layers highlighted. b) Temporal RoPE attention components for selected layers on a synthetic event sequence. Larger RMS values correspond to stronger temporal modulation.} 
    \label{fig:layer_atten}
\end{figure}

\begin{figure}[!b]
    \centering
    \includegraphics[width=0.95\linewidth,trim={10 6 9 12},clip]{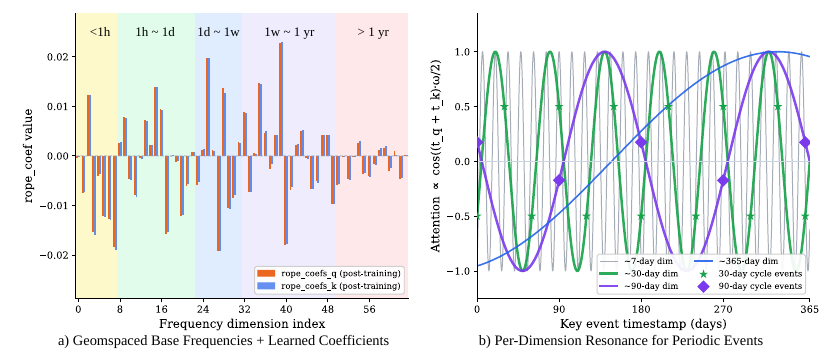}
    \vspace{-8pt}
    \caption{Multigranular temporal encoding. a) Learned query and key temporal coefficients over frequency dimensions; shaded regions group dimensions by time period, from subhour to multiyear scales. Training upweights selected temporal bands rather than all frequencies uniformly. 
    b) Representative resonance curves for those bands, where short-period dimensions oscillate rapidly while 30-day, 90-day, and yearly dimensions align with recurring user cycles.}
    \label{fig:multigranular}
\end{figure}

\subsection{Multiscale Frequency Bank}
\label{sec:freqbank}

A single base frequency resolves only one time scale, but user behavior spans many, from browsing sessions over minutes and household purchases over weeks or months to seasonal demand over a year.
Capturing behavior across these scales requires several frequencies at once.

T-RoPE assigns a distinct base frequency $\beta_d$ to each rotation plane $d$ using
geometric spacing,
\begin{equation}
\beta_d = \exp\!\left[\log\beta_{\min} + \frac{d-1}{D/2 - 1}\left(\log\beta_{\max} - \log\beta_{\min}\right)\right], \quad d = 1, \ldots, D/2,
\label{eq:geomspace}
\end{equation}
where $\beta_{\min}$ and $\beta_{\max}$ are hyperparameters (defaults $10^{2}$ and
$10^{8}$ seconds), giving a logarithmic bank from $10^{-2}$ to $10^{-8}$ radians/second.
The bank fixes the available frequencies, and the learnable coefficients from Section~\ref{sec:learnable} set how strongly each one contributes. As Figure~\ref{fig:multigranular} shows, even within a single layer training upweights a few temporal bands rather than all frequencies uniformly (a), and different frequencies lock onto different user cycles (b). The learned coefficients adjust the multiscale frequency bank so that the timestamp encoding can represent periodic patterns without explicitly encoding named calendar events such as Christmas.

\subsection{Shifted Query Alignment}
\label{sec:shifted}
\begin{wrapfigure}{r}{0.5\textwidth}
    \centering
    \vspace{-48pt}
    \includegraphics[width=0.53\textwidth,trim={5 3 1 0},clip]{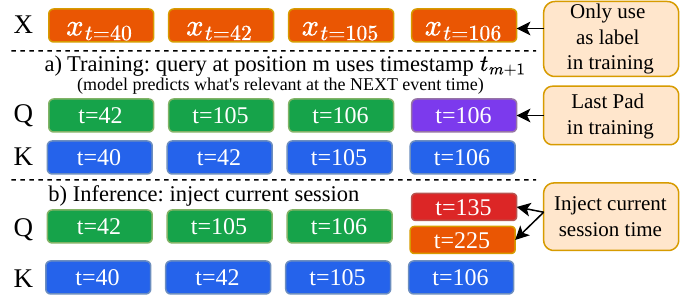}
    \vspace{-18pt}
    \caption{Shifted query alignment. a) During training, each query at position $m$ uses the next-event timestamp $t_{m+1}$, while the key keeps its historical timestamps $t_m$; the last slot is used only as a label. b) During inference, the final query slot is assigned the current session time $t_{\text{now}}$. Injecting different current times changes the temporal phase of the query while leaving the item history unchanged.}
    \label{fig:shift_query}
    \vspace{-5pt}
\end{wrapfigure}

The rotation so far encodes when each history event occurred but not when the predicted event will occur. Since position $m$ is supervised to predict the event at $t_{m+1}$ (and a deployed model receives the current time $t_{\text{now}}$ at inference), we adopt HSTU's RAB~\cite{zhai2024hstu} and rotate the query at this target time rather than at the history time.
As illustrated in Figure~\ref{fig:overview}c, T-RoPE assigns $t_{m+1}$ to the query at position $m$ while keys keep their own timestamps, 
\begin{equation}
\phi^q_{m,d} = \alpha^q_d \frac{t_{m+1}}{\beta_d}, \quad \phi^k_{n,d} = \alpha^k_d \frac{t_n}{\beta_d},
\label{eq:shifted}
\end{equation}
which is implemented as a one-step shift of the timestamp-angle tensor along the sequence
dimension. As Figure~\ref{fig:shift_query} shows, the final
training position keeps its own timestamp since it serves only as a label, while at inference
the final query slot instead receives the current session time $t_{\text{now}}$, so the same
item history can be evaluated under different prediction seasons 
(e.g., a holiday season versus a routine-replenishment context) 
without changing the architecture.

\subsection{Non-Stationary Key Encoding}
\label{sec:nonstationary}

The preceding components give RoPE real timestamps, multiple scales, and the prediction-time query, but they keep one symmetry from standard RoPE. Because queries and keys rotate the same way, the dot product depends only on the angle difference $\phi^q_m - \phi^k_n = (t_{m+1} - t_n)/\beta_d$, which a constant time shift leaves unchanged. The model can tell an item occurred seven days before the query but not where the query falls in the calendar cycle. T-RoPE removes this time-translation invariance next.

T-RoPE breaks this symmetry by applying the \emph{transpose} of the standard rotation
matrix to keys. For plane $d$ and key timestamp $t_n$,
$\vk'_{n,d} = \Rot{-\phi^k_{n,d}}\, \vk_{n,d} = \Rot{\phi^k_{n,d}}^\top \vk_{n,d}$.
Since $\Rot{-\phi} = \Rot{\phi}^\top$ for 2D rotation matrices, this keeps the same
norm-preserving rotation primitive while changing the phase relation inside attention.
The resulting dot product is
\begin{equation}
(\vq'_{m,d})^\top \vk'_{n,d}
= \vq_{m,d}^\top \Rot{-\phi^q_{m,d}} \Rot{-\phi^k_{n,d}}\, \vk_{n,d}
= \vq_{m,d}^\top \Rot{-(\phi^q_{m,d} + \phi^k_{n,d})}\, \vk_{n,d}.
\label{eq:sum_phase}
\end{equation}
The angle now depends on the \emph{sum} $(t_{m+1} + t_n)/\beta_d$ rather than the gap, so attention shifts when the same history falls in a different calendar region. We prove in Section~\ref{sec:theory} that this is sufficient for absolute-time sensitivity.

\subsection{Complete T-RoPE Formulation}

Combining the components, each head receives a timestamp-aware, multiscale,
prediction-aligned, and non-stationary rotation. The final T-RoPE rotated vectors are
\begin{equation}
\vq'_m = \bigoplus_{d=1}^{D/2} \Rot{\phi^q_{m,d}}\, \vq_{m,d}, \qquad
\vk'_n = \bigoplus_{d=1}^{D/2} \Rot{-\phi^k_{n,d}}\, \vk_{n,d},
\label{eq:full_rotation}
\end{equation}
where $\oplus$ denotes concatenation over planes, giving the attention score
\begin{equation}
a_{mn} = \sum_{d=1}^{D/2} \vq_{m,d}^\top \Rot{-(\alpha^q_d t_{m+1} + \alpha^k_d t_n)/\beta_d}\, \vk_{n,d}.
\label{eq:attention}
\end{equation}
This learned linear combination of the prediction timestamp $t_{m+1}$ and the historical
timestamp $t_n$ across all planes lets T-RoPE represent both local recency and repeated
temporal structure. 

Due to space limits, the full forward and backward implementation of T-RoPE is deferred to
Appendix~\ref{app:impl-algos}. The theoretical FLOP overhead (Appendix~\ref{app:flops}) and
measured latency (Appendix~\ref{app:latency}) show that T-RoPE adds only cost linear in
sequence length and head dimension. 

\section{Theoretical Analysis}
\label{sec:theory}

We now isolate the property the method is meant to fix, namely whether the attention score can
change when the same relative history is moved to a different calendar time, since a model that sees only temporal gaps can learn recency but cannot distinguish two queries at different phases of the same annual cycle. The argument proceeds in three steps. We first define time-translation invariance as the property that shifting every timestamp equally leaves attention unchanged; show that standard RoPE stays invariant even on timestamps when query and key share the same relative geometry; and show that T-RoPE's non-stationary key breaks this symmetry, making periodic calendar effects representable. Full proofs and the supporting remarks are deferred to Appendix~\ref{app:proofs}.

\begin{definition}[Time-translation invariance]
A positional encoding is \emph{time-translation invariant} if, for all fixed item
embedding vectors $\vq, \vk$ and all constant offsets $c \in \mathbb{R}$,
\[
\text{Attn}(\vq, \vk; t_{m+1}, t_n) = \text{Attn}(\vq, \vk; t_{m+1} + c,\, t_n + c).
\]
\end{definition}

This definition captures the exact limitation we want to avoid. If it holds, attention
can depend on the elapsed time between query and key, but it cannot depend on the
absolute calendar phase shared by both timestamps.

\begin{theorem}[Shared-coefficient timestamp RoPE is time-translation invariant]
\label{thm:relative}
Let RoPE be applied with any fixed base frequencies $\{\beta_d\}$ and any learnable
shared coefficients $\alpha_d = \alpha^q_d = \alpha^k_d$. Suppose the timestamp-based
angles are
$\phi^q_{m,d} = \alpha_d\, t_{m+1}/\beta_d$ and
$\phi^k_{n,d} = \alpha_d\, t_n/\beta_d$, and the key uses the standard rotation
$\vk'_{n,d} = \Rot{\phi^k_{n,d}}\vk_{n,d}$. Then
$\mathrm{Attn}(\vq, \vk;\, t_{m+1}, t_n)$ depends only on
$(t_{m+1} - t_n)$.
\end{theorem}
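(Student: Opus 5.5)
The argument reduces the attention score to a sum of per-plane bilinear forms and applies the group structure of planar rotations to each. With $\phi^q_{m,d}=\alpha_d t_{m+1}/\beta_d$ and $\phi^k_{n,d}=\alpha_d t_n/\beta_d$, the rotated query is $\vq'_m=\bigoplus_d \Rot{\phi^q_{m,d}}\vq_{m,d}$ and the rotated key is $\vk'_n=\bigoplus_d \Rot{\phi^k_{n,d}}\vk_{n,d}$. The block-diagonal structure gives
\[
(\vq'_m)^\top\vk'_n=\sum_{d=1}^{D/2}\vq_{m,d}^\top \Rot{\phi^q_{m,d}}^\top\Rot{\phi^k_{n,d}}\,\vk_{n,d}.
\]
I take $\mathrm{Attn}(\vq,\vk;t_{m+1},t_n)$ to mean this logit, or any fixed function of it such as a scaled softmax or HSTU's pointwise nonlinearity, for fixed unrotated $\vq,\vk$. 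The only dependence on timestamps therefore enters through these planar products.

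The key step uses two identities: $\Rot{\phi}^\top=\Rot{-\phi}$ and $\Rot{a}\Rot{b}=\Rot{a+b}$. Both follow from the angle-addition formulas for cosine and sine, or from identifying $\Rot{\phi}$ with multiplication by $e^{i\phi}$. Together they give
\[
\Rot{\phi^q_{m,d}}^\top\Rot{\phi^k_{n,d}}=\Rot{\phi^k_{n,d}-\phi^q_{m,d}}=\Rot{\alpha_d (t_n-t_{m+1})/\beta_d},
\]
which is the relation already recorded in Section~\ref{sec:prelim}. Shared coefficients are essential here. If $\alpha^q_d\neq\alpha^k_d$, the angle would be $(\alpha^k_d t_n-\alpha^q_d t_{m+1})/\beta_d$. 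That is not a function of the difference alone, since shifting both timestamps by $c$ changes it by $(\alpha^k_d-\alpha^q_d)c/\beta_d$.

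The score is therefore $\sum_d \vq_{m,d}^\top \Rot{-\alpha_d(t_{m+1}-t_n)/\beta_d}\vk_{n,d}$. This is a function $g(t_{m+1}-t_n)$ whose form depends only on the fixed $\vq,\vk,\alpha_d,\beta_d$. Replacing $(t_{m+1},t_n)$ by $(t_{m+1}+c,t_n+c)$ leaves the argument unchanged, so the encoding is time-translation invariant in the sense of the Definition. If Attn includes a softmax over $n$, every logit in the row depends only on its own gap, and the common shift $c$ applied to all timestamps preserves every gap, so the normalized weights are also invariant.

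There is no real technical obstacle; the proof is the rotation group identity. The step needing care is pinning down what $\mathrm{Attn}$ means, namely the raw logit versus the normalized weight. I would handle this by proving the claim for the logit and noting that any downstream function applied to logits that all depend only on gaps inherits the invariance.
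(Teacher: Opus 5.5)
Your proposal is correct and follows essentially the same argument as the paper. Both use $\Rot{\phi}^\top=\Rot{-\phi}$ and $\Rot{a}\Rot{b}=\Rot{a+b}$ to collapse each plane to a rotation by $\alpha_d(t_n-t_{m+1})/\beta_d$, sum over planes, and contrast with unequal coefficients shifting the phase by $(\alpha^k_d-\alpha^q_d)c/\beta_d$; your remark that a softmax or other function of the logit row inherits the invariance is a small clarification the paper leaves implicit.
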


The theorem explains why timestamp substitution alone is insufficient. In each plane the
query--key dot product reduces to a rotation by $\alpha_d(t_n - t_{m+1})/\beta_d$, which
depends only on the gap and is unchanged by a common shift. Timestamp RoPE can measure how
far a past event is from the prediction time, but it still erases the shared calendar
location of the two timestamps. 

\begin{remark}[Independent standard-RoPE coefficients]
\label{rem:independent_standard_rope}
The same phase analysis shows that unequal coefficients can also break the symmetry, so the
condition above is not the only route. Independent query and key coefficients can also break
invariance. Our claim is only that T-RoPE's non-stationary key is
a RoPE-compatible \emph{sufficient} mechanism that does so directly.
\end{remark}

\begin{theorem}[T-RoPE with non-stationary keys has absolute-time sensitivity]
\label{thm:absolute}
Under T-RoPE, each per-plane summand of the attention score (Eq.~\ref{eq:attention})
rotates by the angle $-(\alpha^q_d\, t_{m+1} + \alpha^k_d\, t_n)/\beta_d$, which combines
$t_{m+1}$ and $t_n$ additively. If the trained model satisfies $\alpha^q_d + \alpha^k_d \neq 0$ for dimension
$d$, then T-RoPE is \emph{not} time-translation invariant. For any shift $c$ satisfying
$(\alpha^q_d + \alpha^k_d)c/\beta_d \notin 2\pi\mathbb{Z}$, there exist timestamps
$t_{m+1}, t_n$ and query and key vectors such that
$\mathrm{Attn}(t_{m+1}, t_n) \neq
\mathrm{Attn}(t_{m+1} + c, t_n + c)$.
\end{theorem}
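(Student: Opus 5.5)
The plan is to compute the per-plane summand explicitly, observe how it transforms under a common shift, and then choose witnesses that isolate a single plane $d$. By Eq.~\ref{eq:attention}, the $d$-th summand is $s_d(t_{m+1},t_n)=\vq_{m,d}^\top \Rot{-\psi_d}\vk_{n,d}$ with $\psi_d=(\alpha^q_d t_{m+1}+\alpha^k_d t_n)/\beta_d$. Shifting both timestamps by $c$ gives $\psi_d\mapsto\psi_d+\delta_d$, where $\delta_d=(\alpha^q_d+\alpha^k_d)c/\beta_d$. So the shift acts on plane $d$ as an extra rotation by $-\delta_d$, and the hypothesis $\delta_d\notin 2\pi\mathbb{Z}$ says that $\Rot{-\delta_d}\neq I$.

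The witnesses are built as follows. Choose the query and key vectors to be zero on every plane except $d$; this is allowed because the statement quantifies existentially over $\vq,\vk$. The score then reduces to the single summand, and the other planes' coefficients, which might satisfy $\alpha^q_{d'}+\alpha^k_{d'}=0$, play no role. Take arbitrary timestamps, for instance $t_{m+1}=t_n=0$ (or any admissible positive values), so that $\psi_d$ is some fixed angle $\psi$. Set $\vk_{n,d}=\Rot{\psi}e_1$, so that $\Rot{-\psi}\vk_{n,d}=e_1$. The unshifted score is then $\vq_{m,d}^\top e_1$ and the shifted score is $\vq_{m,d}^\top\Rot{-\delta_d}e_1=\vq_{m,d}^\top(\cos\delta_d,-\sin\delta_d)^\top$.

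The only real case split is in choosing $\vq_{m,d}$. If $\sin\delta_d\neq0$, take $\vq_{m,d}=e_2$: the scores are $0$ and $-\sin\delta_d$, which differ. If $\sin\delta_d=0$, then $\cos\delta_d=-1$, because $\delta_d\notin2\pi\mathbb{Z}$. Take $\vq_{m,d}=e_1$: the scores are $1$ and $-1$. Equivalently, $\vq^\top(I-\Rot{-\delta_d})e_1$ is nonzero for some $\vq$ because $(I-\Rot{-\delta_d})e_1\neq0$, and this vector is nonzero since $1-\cos\delta_d$ and $\sin\delta_d$ do not both vanish.

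To conclude, note that Attn is a fixed function of the logit, either the raw score (as in HSTU-style pointwise attention) or the softmax with other keys held fixed. A difference in the logit therefore yields a difference in Attn. If Attn means the softmax-normalized weight, I would add one remark: place a second key with a zero-vector slice, so that its logit is unchanged by the shift, and the normalized weight then changes strictly with the witness logit. The argument also shows non-invariance in the sense of the definition, since a single $c$ already breaks it; such a $c$ exists whenever $\alpha^q_d+\alpha^k_d\neq0$, for example $c=\pi\beta_d/(2(\alpha^q_d+\alpha^k_d))$. I expect this last step, matching the abstract Attn in the definition to the logit, to be the main point needing care; the trigonometric part is routine.
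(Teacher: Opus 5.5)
Your proposal is correct and follows the paper's proof. Both arguments use the per-plane phase $-(\alpha^q_d t_{m+1}+\alpha^k_d t_n)/\beta_d$; the paper derives it from $\Rot{\phi}^\top=\Rot{-\phi}$ and $\Rot{a}\Rot{b}=\Rot{a+b}$, while you read it off Eq.~\ref{eq:attention}. Both then note that a common shift adds $\delta_d=(\alpha^q_d+\alpha^k_d)c/\beta_d$, and conclude from $\Rot{-\delta_d}\neq I$ that some query--key pair sees a changed bilinear form. Your explicit witnesses (supported only on plane $d$, with $\vk_{n,d}=\Rot{\psi}e_1$ and the case split on $\sin\delta_d$) make concrete what the paper asserts abstractly, and they also rule out cancellation by the other planes' summands, which the paper's per-plane argument leaves implicit.
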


Unlike Theorem~\ref{thm:relative}, the non-stationary key turns the per-plane phase into
the \emph{sum} $(\alpha^q_d t_{m+1} + \alpha^k_d t_n)/\beta_d$ rather than a difference, so
a common shift moves it by $(\alpha^q_d + \alpha^k_d)c/\beta_d$ and the score changes
unless this is a multiple of $2\pi$. Because the coefficients are learnable per dimension,
the model chooses which timescales stay stationary and which become absolute-time
sensitive, and in our deployed model this induces visible layer-dependent temporal bias
(Figure~\ref{fig:layer_atten}) rather than the degenerate
$\alpha^q_d=-\alpha^k_d$ case. Absolute-time sensitivity matters because many
recommendation signals are periodic, which the next corollary connects to calendar cycles.

\begin{corollary}[Seasonality representability]
\label{cor:seasonal}
Let $P > 0$ be a temporal period (e.g., 365 days), and define the effective query
angular frequency $\omega^q_d = \alpha^q_d/\beta_d$. T-RoPE can represent a
sinusoidal dependence on the prediction-time calendar phase with period $P$ if there
exists a dimension $d$ and nonzero integer harmonic $r$ such that
$\omega^q_d = 2\pi r/P$.
\end{corollary}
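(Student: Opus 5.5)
The plan is to isolate a single plane $d$ of the score in Eq.~\ref{eq:attention}, expand its rotation into cosine and sine parts, and read off the dependence on the prediction timestamp $t_{m+1}$. I will not aim for a general statement about arbitrary target functions. The goal is only to show that the score, viewed as a function of $t_{m+1}$, contains an exactly $P$-periodic sinusoid whose amplitude and phase are set by the learned query and key vectors.

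Fix a plane $d$ with $\omega^q_d = \alpha^q_d/\beta_d = 2\pi r/P$, and write $\psi_n = \alpha^k_d t_n/\beta_d$ for the key phase, which does not involve $t_{m+1}$. By Eq.~\ref{eq:sum_phase}, the plane-$d$ summand is
\[
s_d(t_{m+1}) = \vq_{m,d}^\top \Rot{-(\omega^q_d t_{m+1} + \psi_n)}\,\vk_{n,d}.
\]
Using $\Rot{-\theta} = \cos\theta\, I - \sin\theta\, J$ with $J = \begin{bmatrix}0&-1\\1&0\end{bmatrix}$, this becomes
\[
s_d = A\cos(\omega^q_d t_{m+1}+\psi_n) - B\sin(\omega^q_d t_{m+1}+\psi_n),
\]
where $A = \vq_{m,d}^\top\vk_{n,d}$ and $B = \vq_{m,d}^\top J\vk_{n,d}$. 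Rewriting this as $\sqrt{A^2+B^2}\,\cos(2\pi r t_{m+1}/P + \psi_n + \gamma)$ shows that it is a sinusoid in $t_{m+1}$ with period $P/|r|$. That period divides $P$, so $s_d$ is invariant under $t_{m+1}\mapsto t_{m+1}+P$ and depends on $t_{m+1}$ only through the calendar phase $t_{m+1} \bmod P$.

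Next I show the sinusoid is genuinely present rather than degenerate. Choosing $\vq_{m,d}$ and $\vk_{n,d}$ nonzero gives $A^2+B^2 = \|\vq_{m,d}\|^2\|\vk_{n,d}\|^2 > 0$, since the vectors $\vk$ and $J\vk$ form an orthogonal basis of the plane. The phase offset $\gamma$ can then be set to any value by rotating $\vq_{m,d}$. Setting the other planes' query slices to zero isolates this term, so the full score equals this sinusoid exactly. If $r\neq \pm 1$, the dependence is at a harmonic of $P$, which is still a sinusoidal dependence with period $P$, matching the phrase ``harmonic $r$.''

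The main obstacle is conceptual rather than computational: making precise what ``represent'' means, and explaining why this differs from Theorem~\ref{thm:relative}. I would define representability as the existence of parameters for which the score is a nonconstant sinusoid in the prediction-time phase with period $P$, for fixed item embeddings and a fixed key timestamp. Under that definition, the contrast with the shared-coefficient case is as follows. There the score is a function of $t_{m+1}-t_n$ only, so any phase dependence is tied to the gap and cannot reflect the absolute calendar phase when the whole history is shifted. In T-RoPE, shifting both timestamps by $c$ moves the phase by $(\alpha^q_d+\alpha^k_d)c/\beta_d$, which by Theorem~\ref{thm:absolute} genuinely changes the score. I would state this comparison briefly after the computation.
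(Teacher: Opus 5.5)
Your proposal is correct and follows the paper's route. Like the paper, you isolate plane $d$, expand $\Rot{-\theta}$ into cosine and sine terms while treating $t_n$ as a fixed phase offset, and conclude $P$-periodicity in $t_{m+1}$ from $\omega^q_d = 2\pi r/P$.

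Your identity $A^2+B^2=\|\vq_{m,d}\|^2\|\vk_{n,d}\|^2>0$ adds something the paper leaves implicit: it rules out a vanishing sinusoid. Your sine coefficient $-B = q_1k_2 - q_2k_1$ is also the correct one; the paper's $(q_2k_1-q_1k_2)$ has the opposite sign, which is harmless because only periodicity is used.
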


Intuitively, with the key timestamp fixed the plane-$d$ score carries a
$\cos(\alpha^q_d t_{m+1}/\beta_d + \text{const})$ term, so choosing $\omega^q_d = 2\pi r/P$
makes attention periodic in the prediction time with period $P$. 

\section{Experiments}
\label{sec:experiments}

We evaluate T-RoPE in three settings. First, public benchmarks across domains and sizes test
general accuracy. Second, a Shop ablation isolates which temporal mechanisms matter under
non-stationary commerce. Third, an online A/B test measures production retrieval and business impact.

\subsection{Validating T-RoPE on Public Datasets}
\label{sec:public_experiments}

To validate T-RoPE against existing methods with reproducible results, we evaluate on five
public recommendation benchmarks whose statistics are listed in
Table~\ref{tab:datasets} (Appendix~\ref{app:datasets}). We compare against five existing
methods: \textbf{SASRec}~\citep{kang2018self} (no temporal encoding),
\textbf{TiSASRec}~\citep{li2020time} (time-interval attention biases),
\textbf{HSTU}~\citep{zhai2024hstu} (position-only RAB),
\textbf{HSTU + Time RAB}~\citep{zhai2024hstu}, and
\textbf{HSTU + TO-RoPE}~\citep{wei2025torope} (HSTU with Time+Pos RAB + TO-RoPE). All of these, together with our
\textbf{HSTU + T-RoPE} (HSTU with Time+Pos RAB and T-RoPE), are
implemented in the same PyTorch training framework and share identical item-ID inputs,
preprocessing filters, maximum sequence lengths, leave-one-out protocol, candidate
evaluation procedure, and training hyperparameters, making the comparison fair and consistent.
Full hyperparameter settings are reported in Appendix~\ref{app:hyperparams}.

\begin{table*}[t]
\vspace{-10pt}
\caption{Sequential recommendation results (\%). Best per row in \textbf{bold}, second best \underline{underlined}. The \textcolor{relgreen}{\textbf{green}} value beside our result is its relative improvement (\%) over the second-best method in that row.}
\label{tab:main}
\centering
\small
\setlength{\tabcolsep}{4.8pt}
\renewcommand{\arraystretch}{1.08}
\begin{tabular}{@{}llccccc>{\columncolor{gray!20}}c@{}}
\toprule
& & \multicolumn{6}{c}{Method} \\
\cmidrule(l){3-8}
Dataset & Metric
  & SASRec
  & TiSASRec
  & HSTU
  & \makecell{HSTU+\\Time RAB}
  & \makecell{HSTU+\\TO-RoPE}
  & \makecell{\textbf{HSTU+}\\\textbf{T-RoPE (ours)}} \\
\midrule
\multirow{5}{*}{ML-20M}
  & HR@10   & 31.36 & 31.97 & 30.87 & \underline{32.05} & 31.19 & \textbf{32.37}\,\rel{1.0} \\
  & HR@50   & 56.87 & 57.50 & 56.19 & \underline{58.01} & 56.69 & \textbf{58.55}\,\rel{0.9} \\
  & NDCG@10 & 18.04 & 18.38 & 17.84 & \underline{18.47} & 17.93 & \textbf{18.62}\,\rel{0.8} \\
  & NDCG@50 & 23.68 & 24.04 & 23.44 & \underline{24.20} & 23.57 & \textbf{24.42}\,\rel{0.9} \\
  & MRR     & 15.42 & 15.68 & 15.31 & \underline{15.79} & 15.32 & \textbf{15.91}\,\rel{0.8} \\
\midrule
\multirow{5}{*}{Amazon Books}
  & HR@10   &  2.97 &  3.19 &  5.88 & \underline{ 6.89} &  6.69             & \textbf{ 7.65}\,\rel{11.0} \\
  & HR@50   &  7.26 &  7.66 & 12.40 & \underline{14.27} & 13.30             & \textbf{15.45}\,\rel{8.3} \\
  & NDCG@10 &  1.57 &  1.71 &  3.33 &  3.93             & \underline{ 3.96} & \textbf{ 4.40}\,\rel{11.1} \\
  & NDCG@50 &  2.50 &  2.68 &  4.75 & \underline{ 5.54} &  5.40             & \textbf{ 6.11}\,\rel{10.3} \\
  & MRR     &  1.41 &  1.53 &  2.94 &  3.46             & \underline{ 3.51} & \textbf{ 3.87}\,\rel{10.3} \\
\midrule
\multirow{5}{*}{Pixel200K}
  & HR@10   &  4.72 &  4.65 &  4.22 & \underline{ 7.21} &  4.16 & \textbf{ 7.41}\,\rel{2.8} \\
  & HR@50   & 10.85 & 10.86 & 10.26 & \underline{16.82} & 10.09 & \textbf{17.37}\,\rel{3.3} \\
  & NDCG@10 &  2.65 &  2.61 &  2.33 & \underline{ 4.08} &  2.30 & \textbf{ 4.15}\,\rel{1.7} \\
  & NDCG@50 &  3.98 &  3.93 &  3.63 & \underline{ 6.14} &  3.58 & \textbf{ 6.30}\,\rel{2.6} \\
  & MRR     &  2.39 &  2.35 &  2.12 & \underline{ 3.70} &  2.11 & \textbf{ 3.76}\,\rel{1.6} \\
\midrule
\multirow{5}{*}{Pixel1M}
  & HR@10   &  7.12 &  7.16 &  6.89 & \underline{12.46} &  6.81 & \textbf{13.35}\,\rel{7.1} \\
  & HR@50   & 15.62 & 15.77 & 15.31 & \underline{26.16} & 15.14 & \textbf{27.44}\,\rel{4.9} \\
  & NDCG@10 &  4.04 &  4.05 &  3.88 & \underline{ 7.22} &  3.83 & \textbf{ 7.81}\,\rel{8.2} \\
  & NDCG@50 &  5.88 &  5.92 &  5.70 & \underline{10.19} &  5.64 & \textbf{10.88}\,\rel{6.8} \\
  & MRR     &  3.61 &  3.62 &  3.46 & \underline{ 6.42} &  3.43 & \textbf{ 6.95}\,\rel{8.3} \\
\midrule
\multirow{5}{*}{Pixel8M}
  & HR@10   &  4.89 &  4.96 &  5.01 & \underline{ 9.14} &  4.99 & \textbf{11.47}\,\rel{25.5} \\
  & HR@50   & 10.69 & 10.82 & 10.93 & \underline{18.87} & 10.86 & \textbf{23.02}\,\rel{22.0} \\
  & NDCG@10 &  2.75 &  2.79 &  2.82 & \underline{ 5.36} &  2.82 & \textbf{ 6.85}\,\rel{27.8} \\
  & NDCG@50 &  4.00 &  4.06 &  4.11 & \underline{ 7.47} &  4.09 & \textbf{ 9.35}\,\rel{25.2} \\
  & MRR     &  2.44 &  2.48 &  2.51 & \underline{ 4.78} &  2.50 & \textbf{ 6.11}\,\rel{27.8} \\
\bottomrule
\end{tabular}
\end{table*}

Detailed results across the five benchmarks are reported in Table~\ref{tab:main}, covering HR@10, HR@50, NDCG@10, NDCG@50, and MRR. HSTU + T-RoPE achieves the best result on all five metrics on every dataset, and the improvement scales with how much temporal structure the data contain. On the dense
ML-20M benchmark, whose histories are fairly regular, gains are modest but consistent. The
effect is clearer on sparse and commercial data, where T-RoPE improves over HSTU + Time RAB by 8--12\%
across all five metrics on Amazon Books, and over TO-RoPE by 78--130\% in HR@10 and
72--112\% in HR@50 across the PixelRec splits. The largest margins appear on the sparse and commercial data, where absolute calendar phase carries the most signal.

\subsection{Ablation Study on an Industrial-Scale Dataset}
\label{sec:ablation}

To identify which design choices drive the gains, we ablate T-RoPE components on Shop, an
internal e-commerce dataset with roughly 150M users, 4.8M items, and over 6B interactions
spanning multiple years. Larger and more temporally heterogeneous than the public
benchmarks, it better tests the multiscale and non-stationary mechanisms. The model uses
128-dimensional embeddings, 12 HSTU blocks, 8 attention heads, 32 attention dimensions per
head, and a maximum sequence length of 512. We use a one-day temporal holdout (training on
interactions before the holdout date, every meaningful interaction on the held-out day
counting as a positive) and report recall@K rather than HR@K, since a user may have
several relevant interactions. The detailed results are shown in Table~\ref{tab:ablation}, which removes components cumulatively from the full T-RoPE model down to the HSTU + Time RAB backbone, with the key findings discussed below.

\definecolor{impgreen}{RGB}{0,128,0}
\definecolor{gapred}{HTML}{D1242F}
\newcommand{\dgrn}[1]{{\fontsize{6}{7}\selectfont\textcolor{impgreen}{\textbf{(+#1\%)}}}}
\newcommand{\drd}[1]{{\fontsize{6}{7}\selectfont\textcolor{gapred}{\textbf{($-$#1\%)}}}}

\begin{table}[t]
\vspace{-10pt}
\caption{Ablation study on the internal Shop e-commerce dataset.
The shaded top row is the full T-RoPE model. Rows~2--5 remove components cumulatively down
to the HSTU + Time RAB backbone, and rows~6--8 remove single components from the full model.
Metric values are percentages, with the best value in each column shown in \textbf{bold}.
Parenthetical values show each row's relative change from the full model.
\textcolor{impgreen}{Green} indicates an improvement over T-RoPE, while
\textcolor{gapred}{red} indicates a shortfall.}
\label{tab:ablation}
\centering
\small
\setlength{\tabcolsep}{5pt}
\renewcommand{\arraystretch}{0.95}
\begin{tabular}{@{}lrrrrr@{}}
\toprule
Configuration & Recall@50 & NDCG@50 & Recall@200 & NDCG@200 & MRR \\
\midrule
\rowcolor{gray!20}
\textbf{T-RoPE}
  & \textbf{31.41} & \textbf{22.94} & 41.77 & \textbf{24.82} & \textbf{23.14} \\
~~- Non-Stationary (§\ref{sec:nonstationary})
  & \makecell[r]{30.97\\[-2.5pt]\drd{1.40}} & \makecell[r]{22.02\\[-2.5pt]\drd{4.01}}
  & \makecell[r]{40.49\\[-2.5pt]\drd{3.06}} & \makecell[r]{23.29\\[-2.5pt]\drd{6.16}}
  & \makecell[r]{21.88\\[-2.5pt]\drd{5.45}} \\
~~- shifted (§\ref{sec:shifted})
  & \makecell[r]{30.90\\[-2.5pt]\drd{1.62}} & \makecell[r]{20.59\\[-2.5pt]\drd{10.24}}
  & \makecell[r]{40.92\\[-2.5pt]\drd{2.03}} & \makecell[r]{22.55\\[-2.5pt]\drd{9.15}}
  & \makecell[r]{20.90\\[-2.5pt]\drd{9.68}} \\
~~- multiscale freq.~bank (§\ref{sec:freqbank})
  & \makecell[r]{25.36\\[-2.5pt]\drd{19.26}} & \makecell[r]{13.23\\[-2.5pt]\drd{42.33}}
  & \makecell[r]{36.98\\[-2.5pt]\drd{11.47}} & \makecell[r]{15.42\\[-2.5pt]\drd{37.87}}
  & \makecell[r]{11.96\\[-2.5pt]\drd{48.31}} \\
~~- abs.\ time embedding (§\ref{sec:timestamp} + §\ref{sec:learnable})
  & \makecell[r]{25.68\\[-2.5pt]\drd{18.24}} & \makecell[r]{13.81\\[-2.5pt]\drd{39.80}}
  & \makecell[r]{37.00\\[-2.5pt]\drd{11.42}} & \makecell[r]{15.96\\[-2.5pt]\drd{35.70}}
  & \makecell[r]{12.70\\[-2.5pt]\drd{45.12}} \\
\midrule
~~- Shifted Query RoPE
  & \makecell[r]{30.87\\[-2.5pt]\drd{1.72}} & \makecell[r]{21.54\\[-2.5pt]\drd{6.10}}
  & \makecell[r]{40.50\\[-2.5pt]\drd{3.04}} & \makecell[r]{23.42\\[-2.5pt]\drd{5.64}}
  & \makecell[r]{22.46\\[-2.5pt]\drd{2.94}} \\
\midrule
~~- Key RoPE
  & \makecell[r]{31.19\\[-2.5pt]\drd{0.70}} & \makecell[r]{20.58\\[-2.5pt]\drd{10.29}}
  & \makecell[r]{\textbf{41.90}\\[-2.5pt]\dgrn{0.31}} & \makecell[r]{22.63\\[-2.5pt]\drd{8.82}}
  & \makecell[r]{20.90\\[-2.5pt]\drd{9.68}} \\
\midrule
~~- Time RAB
  & \makecell[r]{29.08\\[-2.5pt]\drd{7.42}} & \makecell[r]{20.76\\[-2.5pt]\drd{9.50}}
  & \makecell[r]{39.69\\[-2.5pt]\drd{4.98}} & \makecell[r]{22.95\\[-2.5pt]\drd{7.53}}
  & \makecell[r]{16.89\\[-2.5pt]\drd{27.01}} \\
\bottomrule
\end{tabular}
\end{table}

\paragraph{The multiscale frequency bank and shifted queries supply the core gains.}
The multiscale bank has the largest effect. Removing it cuts NDCG@50 by 36\% when comparing
rows~3 and~4. This shows that multiscale rotation turns raw timestamps into usable frequency
structure. Without it, the timestamp is just noise, and the single frequency embedding in
row~4 falls about 4\% below the HSTU + Time RAB backbone in row~5. Shifted query alignment
provides the next largest gain by adding seasonality to the query at inference. It improves
NDCG@50 by 7\% when comparing rows~2 and~3.

\paragraph{The non-stationary key drives absolute-time sensitivity, complemented by key and query rotation.}
Adding the non-stationary key (§\ref{sec:nonstationary}, row~1) outperforms the shifted model (row~2) on all five metrics, improving recall@200 by 3.2\% and NDCG@50 by 4.2\%, which confirms that it injects useful absolute-time sensitivity. Key and query rotation
each carry such sensitivity on their own (Remark~\ref{rem:independent_standard_rope}), so
we further ablate them one at a time. The shifted-query rotation has the larger effect on
retrieval, since removing it (row~6) lowers recall@200 by 3.0\%, whereas removing the key
rotation (row~7) instead costs ranking precision, cutting NDCG@50 by 10.3\%. The two are therefore complementary, and the
non-stationary key combines both sides and pushes furthest.

\paragraph{Time RAB and T-RoPE are complementary.}

Removing Time RAB from the full model (row~8) lowers all metrics, with the starkest drop of 27.0\% in MRR.  This suggests RAB still captures useful relative-recency signals that T-RoPE does not. Even so, T-RoPE without RAB still improves NDCG@50 by roughly 50\% over the baseline, so the two mechanisms contribute independently and combine for the best result.



\subsection{Online A/B Test}
\label{sec:online_ab}

To measure real production impact, we deploy T-RoPE in an online A/B test in the Shop app
recommender, scaling up the offline architecture while keeping the T-RoPE design unchanged.
Table~\ref{tab:online_ab} reports relative lift against the production control,
an identical model without T-RoPE.

\begin{table}[t]
\vspace{-10pt}
\caption{Online A/B test results in the Shop app. Values are relative lifts against the control group.}
\label{tab:online_ab}
\centering
\small
\setlength{\tabcolsep}{6pt}
\begin{tabular}{@{}llrr@{}}
\toprule
Metric & Definition & Relative lift & P-Value \\
\midrule
CVR & Conversion rate & $+0.33\%$ & $0.037$ \\
Shop orders & Count of orders placed on the Shop app & $+0.63\%$ & $0.094$ \\
\bottomrule
\end{tabular}
\end{table}

The online results align with the offline gains. CVR shows the strongest signal
($+0.33\%$, $p=0.037$), indicating that time-aware retrieval converts more sessions into
interactions. This carries to higher-intent behavior. 
The Shop orders also increase $+0.63\%$ ($p=0.094$), directional evidence of downstream value. Overall, time-aware RoPE
yields positive business gains in production.

\section{Related Work}
\label{sec:related}

\paragraph{Sequential recommendation models}
Sequential recommendation models user preferences from ordered interaction histories.
Early deep models used recurrent networks (GRU4Rec~\citep{hidasi2016session},
NARM~\citep{li2017narm}) and convolution (Caser~\citep{tang2018caser}). The
Transformer~\citep{vaswani2017attention} reshaped the field: SASRec~\citep{kang2018self}
showed that unidirectional self-attention outperforms RNNs, and
BERT4Rec~\citep{sun2019bert4rec} added bidirectional masked-item prediction. Later work
enriched attention with item attributes (FDSA~\citep{zhang2019fdsa}), replaced it with
learnable frequency-domain filters (FMLP-Rec~\citep{zhou2022fmlp}), or added
self-supervised pretraining (S$^3$-Rec~\citep{zhou2020s3rec}); at production scale,
HSTU~\citep{zhai2024hstu} reaches trillions of parameters via hierarchical action-based
transduction. A newer thread casts recommendation as autoregressive generation over
discrete semantic tokens: TIGER~\citep{rajput2023tiger} uses RQ-VAE item codes,
LETTER~\citep{wang2024letter} adds collaborative signals to tokenization,
ActionPiece~\citep{hou2025actionpiece} tokenizes at action granularity,
PSID~\citep{zhang2025psid} relies on pure semantic IDs, and HLLM~\citep{chen2024hllm,zhang2026grlm,zhang2026ragr,liu2026omega,he2026pauserec}
pairs an item-level with a user-level LLM. T-RoPE is a drop-in positional encoding
compatible with any of these Transformer backbones.

\paragraph{Temporal information in sequential recommendation}
Temporal dynamics are a core problem in sequential recommendation. Early collaborative
filtering showed that preferences and item popularity shift substantially over calendar
time~\citep{koren2009temporal}. TiSASRec~\citep{li2020time} discretizes the elapsed gap
$|t_i - t_j|$ into buckets injected as additive attention biases, capturing relative
recency but only the gap magnitude, with no calendar-phase awareness. Knowledge-guided
methods~\citep{wang2020chorus} learn time-decay kernels over item relations, and
attention-mixture approaches~\citep{tran2023mojito} condition on calendar features such as
hour of day. TO-RoPE~\citep{wei2025torope} blends position indices and timestamps inside
RoPE angles, making attention sensitive to temporal distance, but because query and key
share the same timestamp coefficient it remains time-translation invariant
(Theorem~\ref{thm:relative}, Remark~\ref{rem:torope}). T-RoPE instead introduces a
non-stationary key with learnable query/key coefficients, breaking that invariance and
giving the absolute-time awareness needed for periodic patterns such as seasonal purchase
cycles.



\section{Conclusion}
\label{sec:conclusion}

We introduced T-RoPE, a time-aware rotary embedding that rotates attention by when events happen rather than by their order, so the model reasons directly over elapsed time and recurring calendar cycles. We proved that shared-geometry timestamp encodings are time-translation invariant and cannot model seasonal preferences, motivating T-RoPE's non-stationary keys. We validated it on public benchmarks and a large commercial dataset, where it consistently outperforms baselines, with the largest gains under high temporal heterogeneity. Grounding rotary attention in time is thus principled and effective for recommendation.

\subsection*{AI use statement}

We used AI only to aid and polish writing at the sentence level,
including edits to clarity, grammar, and phrasing. We did not use generative AI
for literature retrieval or discovery, research ideation, theoretical
development, methodology or experiment design, code implementation,
data generation or processing, data analysis, or interpretation of results.
The authors reviewed every AI-assisted edit and take full responsibility for
the final content of this work.

\subsection*{Ethics statement}

Public experiments used established recommendation benchmarks obtained from the cited sources. The industrial experiments use data for which the authors had authorized access, and the online experiment compared two recommender variants in the normal operation of the Shop app. The industrial and online studies followed the company’s processes for data access, privacy, and experimentation governance. We report their results only in aggregate and release neither individual user records nor personally identifying information. The proprietary data and production infrastructure therefore cannot be made public. 

\subsection*{Reproducibility statement}

We support reproduction of the public-benchmark results through the complete
method specification in Section~\ref{sec:method}, the experimental protocol in
Section~\ref{sec:public_experiments}, and the implementation details in
Appendix~\ref{app:impl}. Appendix~\ref{app:impl-algos} gives forward and backward
algorithms, Appendix~\ref{app:datasets} lists the public datasets and their
statistics, and Appendix~\ref{app:hyperparams} reports the architecture,
optimization settings, random seed, and hardware. Appendix~\ref{app:proofs}
contains full proofs of the theoretical claims, while
Appendices~\ref{app:flops} and~\ref{app:latency} document computational cost and
measured throughput.

\bibliography{iclr2027_conference}
\bibliographystyle{iclr2027_conference}

\clearpage
\appendix
\section*{Appendix}

\section{Full Proofs}
\label{app:proofs}

\subsection{Proof of Theorem~\ref{thm:relative}}

\emph{Restatement.} With shared coefficients $\alpha_d=\alpha^q_d=\alpha^k_d$ and angles
$\phi^q_{m,d}=\alpha_d t_{m+1}/\beta_d$, $\phi^k_{n,d}=\alpha_d t_n/\beta_d$ under the
standard key rotation $\vk'_{n,d}=\Rot{\phi^k_{n,d}}\vk_{n,d}$, the attention score depends
only on $(t_{m+1}-t_n)$ and is therefore time-translation invariant.

\begin{proof}
We give the full argument for standard key rotation with shared query and key timestamp
coefficients $\alpha^q_d=\alpha^k_d\triangleq\alpha_d$.

Standard RoPE with $\phi^q_{m,d} = \alpha_d t_{m+1}/\beta_d$ and
$\phi^k_{n,d} = \alpha_d t_n/\beta_d$ (standard key rotation
$\Rot{\phi^k_{n,d}}$) gives
\begin{align*}
(\vq'_{m,d})^\top \vk'_{n,d}
&= \vq_{m,d}^\top \Rot{\phi^k_{n,d} - \phi^q_{m,d}}\, \vk_{n,d}
= \vq_{m,d}^\top \Rot{\alpha_d(t_n - t_{m+1})/\beta_d}\, \vk_{n,d},
\end{align*}
which depends only on $(t_n - t_{m+1})$. Summing over planes and heads preserves
time-translation invariance.

For comparison, if standard key rotation uses independent coefficients
$\alpha^q_d$ and $\alpha^k_d$, the phase is
\[
\phi^k_{n,d}-\phi^q_{m,d}
= (\alpha^k_d t_n-\alpha^q_d t_{m+1})/\beta_d .
\]
Under a global shift by $c$, the phase changes by
$(\alpha^k_d-\alpha^q_d)c/\beta_d$. Therefore this standard-RoPE family is invariant
only for active planes where $\alpha^q_d=\alpha^k_d$. Opposite nonzero coefficients
depend on the timestamp sum and are not time-translation invariant.
\end{proof}


\begin{remark}[Relation to TO-RoPE]
\label{rem:torope}
The same argument covers the timestamp component of TO-RoPE, which shares one timestamp
coefficient across query and key. That component depends only on $(\tau_m-\tau_n)$ and is
therefore time-translation invariant, so it cannot represent seasonal shifts on its own.
\end{remark}

\subsection{Proof of Theorem~\ref{thm:absolute}}

\emph{Restatement.} Under T-RoPE the per-plane attention equals
$\vq_{m,d}^\top \Rot{-(\alpha^q_d t_{m+1} + \alpha^k_d t_n)/\beta_d}\, \vk_{n,d}$; if
$\alpha^q_d + \alpha^k_d \neq 0$ for some dimension $d$, then T-RoPE is not
time-translation invariant.

\begin{proof}
We expand Eq.~\ref{eq:sum_phase} carefully using the rotation group property $\Rot{a}\Rot{b} = \Rot{a+b}$,
\begin{align*}
(\vq'_{m,d})^\top \vk'_{n,d}
&= \left(\Rot{\phi^q_{m,d}} \vq_{m,d}\right)^\top \left(\Rot{-\phi^k_{n,d}} \vk_{n,d}\right) \\
&= \vq_{m,d}^\top \Rot{\phi^q_{m,d}}^\top \Rot{-\phi^k_{n,d}}\, \vk_{n,d} \\
&= \vq_{m,d}^\top \Rot{-\phi^q_{m,d}} \Rot{-\phi^k_{n,d}}\, \vk_{n,d} \\
&= \vq_{m,d}^\top \Rot{-\phi^q_{m,d} - \phi^k_{n,d}}\, \vk_{n,d}.
\end{align*}
Substituting $\phi^q_{m,d} = \alpha^q_d t_{m+1}/\beta_d$ and $\phi^k_{n,d} = \alpha^k_d t_n/\beta_d$ gives
\[
(\vq'_{m,d})^\top \vk'_{n,d}
= \vq_{m,d}^\top \Rot{-(\alpha^q_d t_{m+1} + \alpha^k_d t_n)/\beta_d}\, \vk_{n,d}.
\]
Under the time shift $(t_{m+1}, t_n) \to (t_{m+1}+c, t_n+c)$, the argument changes by $(\alpha^q_d + \alpha^k_d)c/\beta_d$. If $(\alpha^q_d + \alpha^k_d)c/\beta_d \notin 2\pi\mathbb{Z}$, the shifted and unshifted rotation matrices are different. Since two different rotation matrices define different bilinear forms, there exist nondegenerate vectors $\vq_{m,d}$ and $\vk_{n,d}$ for which the dot product changes after the shift. This gives a counterexample to the universal equality required by time-translation invariance. If the phase shift is in $2\pi\mathbb{Z}$, that particular shift can coincide, but such modulo coincidences do not establish invariance for all shifts.
\end{proof}

\begin{remark}[Learnable degeneracy and timescale selection]
\label{rem:degeneracy}
The condition $\alpha^q_d + \alpha^k_d \neq 0$ for some $d$ is genuinely needed. If training
converges to $\alpha^q_d = -\alpha^k_d$ for all $d$, T-RoPE degenerates to the
time-translation invariant case. This is not purely a failure mode, since learnable
per-dimension coefficients let the model choose which timescales stay stationary and which
become absolute-time sensitive. In our deployed model the degenerate case does not arise,
and the learned coefficients induce visible layer-dependent temporal bias
(Figure~\ref{fig:layer_atten}).
\end{remark}

\subsection{Proof of Corollary~\ref{cor:seasonal}}

\emph{Restatement.} For a period $P>0$ and effective query angular frequency
$\omega^q_d=\alpha^q_d/\beta_d$, T-RoPE can represent a sinusoidal dependence on the
prediction-time calendar phase with period $P$ whenever $\omega^q_d=2\pi r/P$ for some
dimension $d$ and nonzero integer harmonic $r$.

\begin{proof}
The attention in dimension $d$ can be written as
\[
f_d(t_{m+1}, t_n) = \vq_{m,d}^\top \Rot{-(\alpha^q_d t_{m+1} + \alpha^k_d t_n)/\beta_d}\, \vk_{n,d}.
\]
Expanding in terms of $\cos$ and $\sin$,
\[
f_d = (q_1 k_1 + q_2 k_2)\cos\!\left(\frac{\alpha^q_d t_{m+1} + \alpha^k_d t_n}{\beta_d}\right)
     + (q_2 k_1 - q_1 k_2)\sin\!\left(\frac{\alpha^q_d t_{m+1} + \alpha^k_d t_n}{\beta_d}\right).
\]
Holding $t_n$ fixed, the key term $\alpha^k_d t_n/\beta_d$ is a constant phase offset. Let $\omega^q_d=\alpha^q_d/\beta_d$. If $\omega^q_d=2\pi r/P$ for a nonzero integer harmonic $r$, then both the sine and cosine terms are periodic in $t_{m+1}$ with period $P$. Thus T-RoPE has the capacity to express sinusoidal modulation over prediction time at the target calendar period.
\end{proof}

\section{Implementation Details}
\label{app:impl}

\subsection{Forward and backward algorithms}
\label{app:impl-algos}

Each layer using T-RoPE maintains a geometric frequency bank $\beta \in \mathbb{R}^{D/2}$ and two trainable coefficient vectors $\alpha^q,\alpha^k \in \mathbb{R}^{D/2}$. The frequency bank uses $\beta_{\min}=10^2$ seconds and $\beta_{\max}=10^8$ seconds by default, and the coefficients are initialized independently from $\mathcal{N}(0,\sigma^2)$ with $\sigma=1.0$.

Algorithm~\ref{alg:trope} expands the forward pass. Because we use absolute Unix
timestamps in seconds, which are on the order of $10^9$, T-RoPE computes $\Phi^q$ and
$\Phi^k$ in FP32 even when the surrounding model uses BF16. The rotated outputs are
cast back to the model dtype after the trigonometric functions have been evaluated.

\begin{algorithm}[t]
\caption{T-RoPE forward pass}
\label{alg:trope}
\begin{algorithmic}[1]
\Require queries $Q$, keys $K \in \mathbb{R}^{B \times L \times H \times D}$; timestamps $\mathbf{t} \in \mathbb{R}^{B \times L}$; coefficients $\alpha^q,\alpha^k \in \mathbb{R}^{D/2}$
\State $\beta \leftarrow \geomspace(10^2, 10^8, D/2)$
\State $\Phi^q \leftarrow \textsc{FP32}(\alpha^q)[\text{None},\text{None},:] \cdot \mathbf{t}[:,:, \text{None}] / \beta[\text{None},\text{None},:]$
\State $\Phi^q \leftarrow \textsc{RollLeft}(\Phi^q,\text{dim}=1)$
\State $\Phi^q[:,L,:] \leftarrow \textsc{FP32}(\alpha^q)[:] \cdot \mathbf{t}[:,L,\text{None}] / \beta[:]$
    \Comment{last query keeps its timestamp}
\State $\Phi^k \leftarrow \textsc{FP32}(\alpha^k)[\text{None},\text{None},:] \cdot \mathbf{t}[:,:, \text{None}] / \beta[\text{None},\text{None},:]$
\State $Q' \leftarrow \textsc{ApplyRotFromTheta}(Q,\Phi^q,\mathrm{standard})$
\State $K' \leftarrow \textsc{ApplyRotFromTheta}(K,\Phi^k,\mathrm{nonstationary})$
\State \Return $Q', K'$
\end{algorithmic}
\end{algorithm}

\begin{algorithm}[t]
\caption{\textsc{ApplyRotFromTheta}}
\label{alg:apply-rot-theta}
\begin{algorithmic}[1]
\Require input $X$, angles $\Phi$, rotation type $r \in \{\mathrm{standard}, \mathrm{nonstationary}\}$
\State $C \leftarrow \cos\Phi$, $S \leftarrow \sin\Phi$
\State Split rotary dimensions of $X$ into planes $(x_0,x_1)$
\If{$r=\mathrm{standard}$}
    \State $y_0 \leftarrow x_0 C - x_1 S$; \quad $y_1 \leftarrow x_0 S + x_1 C$
\Else
    \State $y_0 \leftarrow x_0 C + x_1 S$; \quad $y_1 \leftarrow -x_0 S + x_1 C$
\EndIf
\State Preserve all non-rotary tail dimensions of $X$ unchanged
\State \Return concatenated output $Y$
\end{algorithmic}
\end{algorithm}

Algorithm~\ref{alg:trope-backward} summarizes the custom autograd path used for learnable angles. A fixed RoPE backward only needs $dX$, which is obtained by applying the conjugate rotation to the upstream gradient. T-RoPE additionally returns $d\Phi$ because $\Phi$ depends on trainable coefficients. This is the main computational difference from raw RoPE.

\begin{algorithm}[t]
\caption{Backward pass for learnable T-RoPE angles}
\label{alg:trope-backward}
\begin{algorithmic}[1]
\Require upstream gradient $G$, saved input $X$, saved angles $\Phi$, rotation type $r$
\State $C \leftarrow \cos\Phi$, $S \leftarrow \sin\Phi$
\State Split rotary dimensions of $X$ and $G$ into planes $(x_0,x_1)$ and $(g_0,g_1)$
\If{$r=\mathrm{standard}$}
    \State $dx_0 \leftarrow g_0 C + g_1 S$; \quad $dx_1 \leftarrow -g_0 S + g_1 C$
    \State $d\Phi \leftarrow g_0(-x_0S - x_1C) + g_1(x_0C - x_1S)$
\Else
    \State $dx_0 \leftarrow g_0 C - g_1 S$; \quad $dx_1 \leftarrow g_0 S + g_1 C$
    \State $d\Phi \leftarrow g_0(-x_0S + x_1C) + g_1(-x_1S - x_0C)$
\EndIf
\State $d\Phi \leftarrow \textsc{SumHeads}(d\Phi)$
    \Comment{sum contributions from all heads}
\State \Return $dX, d\Phi$
\end{algorithmic}
\end{algorithm}

The gradients for the learnable coefficients follow directly from $\Phi^q_{b,l,d}=\alpha^q_d t^q_{b,l}/\beta_d$ and $\Phi^k_{b,l,d}=\alpha^k_d t_{b,l}/\beta_d$,
\begin{equation}
\frac{\partial \mathcal{L}}{\partial \alpha^q_d}
= \sum_{b,l} \frac{\partial \mathcal{L}}{\partial \Phi^q_{b,l,d}}\frac{t^q_{b,l}}{\beta_d}, \qquad
\frac{\partial \mathcal{L}}{\partial \alpha^k_d}
= \sum_{b,l} \frac{\partial \mathcal{L}}{\partial \Phi^k_{b,l,d}}\frac{t_{b,l}}{\beta_d},
\end{equation}
where $t^q_{b,l}=t_{b,l+1}$ for $l<L$ and $t^q_{b,L}=t_{b,L}$ under the shifted query boundary rule. Because $t_{b,l+1}$ is a Unix timestamp measured in seconds, the gradients are usually large and must be clipped or rescaled to stabilize training.

\subsection{Parameter and FLOP overhead}
\label{app:flops}

Let $B$ be the batch size and $L$ the sequence length, so $N=BL$ is the total number of tokens. Let $H$ be the number of attention heads, $D$ the per-head dimension, $P=D/2$ the number of rotary planes (each plane is a $2\times2$ rotation over one channel pair), and $M$ the number of Transformer layers that use T-RoPE. Because each T-RoPE module keeps separate query and key coefficients, it adds $2P=D$ learnable parameters, so the full model adds $MD$ parameters in total. In our HSTU experiments ($M=2$ layers, $D=128$), this is only $2\cdot128=256$ additional parameters.

We count multiply, add/subtract, and divide as one FLOP each, and report $\sin$ and $\cos$ separately as transcendental operations. The non-stationary key rotation has the same arithmetic count as a standard rotation. It only changes signs in the rotation equations.
The rotation cost below corresponds to Algorithm~\ref{alg:apply-rot-theta} applied once to $Q$ and once to $K$.

\begin{center}
\small
\begin{tabular}{lcc}
\toprule
Operation & Arithmetic FLOPs & Transcendental ops \\
\midrule
Build $\Phi^q,\Phi^k$ & $4NP$ & -- \\
Rotate $Q,K$ & $12NHP$ & $2NP$ $\sin$ and $2NP$ $\cos$ \\
Fixed-RoPE backward $dQ,dK$ & $12NHP$ & $2NP$ $\sin$ and $2NP$ $\cos$ \\
Extra learnable-angle $d\Phi$ & $18NHP + 2NP(H-1)$ & -- \\
Coefficient gradients & $\approx 4NP$ & -- \\
\bottomrule
\end{tabular}
\end{center}

Thus, relative to raw RoPE with a fixed precomputed rotation cache, T-RoPE adds $4NP$ forward arithmetic FLOPs to build timestamp-dependent angles, plus runtime trigonometric evaluations if the kernel computes $\sin\Phi$ and $\cos\Phi$ on the fly. During training, the extra cost over fixed raw RoPE is the $d\Phi$ computation and reduction, followed by the coefficient-gradient chain above. These extra terms are linear in sequence length and head dimension, while self-attention remains quadratic in sequence length.

\subsection{Static latency analysis.}
\label{app:latency}
Table~\ref{tab:latency} reports static latency based on sampled training and validation
iterator throughput during the second epoch for the Pixel1M and Shop examples.
T-RoPE has modest observed throughput overhead relative to baseline HSTU and is comparable to TO-RoPE.

\begin{table}[!ht]
\caption{Static throughput snapshots for baseline HSTU, TO-RoPE, and T-RoPE. Values are mean $\pm$ sample standard deviation in iterations per second.}
\label{tab:latency}
\centering
\small
\begin{tabular}{@{}llcc@{}}
\toprule
Dataset example & Model & Training it/s & Validation it/s \\
\midrule
Pixel1M & Baseline HSTU & $17.58 \pm 0.01$ & $58.59 \pm 0.87$ \\
Pixel1M & HSTU + TO-RoPE & $16.84 \pm 0.05$ & $57.33 \pm 0.74$ \\
Pixel1M & HSTU + T-RoPE & $16.95 \pm 0.01$ & $57.06 \pm 0.44$ \\
\midrule
Shop & Baseline HSTU & $37.19 \pm 0.20$ & $92.66 \pm 0.69$ \\
Shop & HSTU + TO-RoPE & $32.17 \pm 0.35$ & $82.62 \pm 0.92$ \\
Shop & HSTU + T-RoPE & $32.91 \pm 0.05$ & $86.05 \pm 0.71$ \\
\bottomrule
\end{tabular}
\end{table}



\section{Dataset Statistics}
\label{app:datasets}

Table~\ref{tab:datasets} reports the size of the five public benchmarks used in
Section~\ref{sec:public_experiments}.

\begin{table}[!ht]
\caption{Statistics of the five public benchmarks.}
\label{tab:datasets}
\centering
\small
\begin{tabular}{@{}lrrr@{}}
\toprule
Dataset & \#User & \#Item & \#Interaction \\
\midrule
ML-20M~\citep{harper2015movielens}   &     138,493 &  26,744 &   20,000,263 \\
Amazon Books~\citep{hou2024bridging} &     776,370 & 495,063 &    9,488,297 \\
\midrule
Pixel200K~\citep{cheng2024pixelrec}  &     200,000 &  96,282 &    3,965,656 \\
Pixel1M~\citep{cheng2024pixelrec}    &   1,001,822 & 100,541 &   19,886,579 \\
Pixel8M~\citep{cheng2024pixelrec}    &   8,886,078 & 407,082 &  158,488,682 \\
\bottomrule
\end{tabular}
\end{table}

\section{Hyperparameter Settings}
\label{app:hyperparams}

Table~\ref{tab:hyperparams} summarises the architecture and training hyperparameters.
All models use 2 Transformer blocks, 128 in-batch negatives, and a learning rate of $10^{-3}$.
The values shown are the defaults shared by ML-20M, Amazon Books, Pixel200K, Pixel1M, and
Pixel8M. The sequence-length exception is listed in the footnote.
Here $d$ is the embedding dimension, $H$ the number of attention heads, $A$ the attention dimension, and
RAB the relative attention bucket type (Pos = position only; T+P = time + position); ``---'' denotes an absent component.

\begin{table}[!ht]
\caption{Hyperparameter settings (defaults for all five datasets; see footnote for the sequence-length exception).}
\label{tab:hyperparams}
\centering
\small
\begin{tabular}{@{}lcccccc@{}}
\toprule
Method & $d$ & $H$ & $A$ & RAB & Bkts & RoPE \\
\midrule
SASRec         & 512 & 4 & ---  & ---  & ---  & ---    \\
TiSASRec       & 512 & 4 & ---  & ---  & ---  & ---    \\
HSTU           & 512 & 4 & 128  & Pos  & ---  & ---    \\
HSTU+Time RAB  & 512 & 4 & 128  & T+P  & 128  & ---    \\
HSTU+TO-RoPE   & 512 & 4 & 128  & T+P  & 128  & TO-RoPE \\
HSTU+T-RoPE    & 512 & 4 & 128  & T+P  & 128  & T-RoPE \\
\bottomrule
\end{tabular}
\smallskip

\footnotesize
$^\dagger$ Max sequence length $L=200$ for ML-20M; $L=50$ for all other datasets.
\end{table}

For public benchmark training, we use AdamW with learning rate $10^{-3}$, betas
$(0.9,0.98)$, weight decay $0.001$, batch size $128$, seed $42$, and dropout $0.2$
for all dropout settings. Models are trained for at least 10 and at most 100 epochs, with a
ReduceLROnPlateau scheduler on validation HR@10. The objective is SampledSoftmaxLoss
with 128 sampled negatives and temperature $0.05$. Histories are truncated to the
dataset maximum sequence length. All public training and evaluation runs use a single
NVIDIA H100.

\section{Additional Visualizations}
\label{app:figs}

The output attention of T-RoPE is visualized in Figure~\ref{fig:multiscale}, which shows that the day-, week-, month-, and
year-scale event blocks become increasingly grouped by their real time spacing as each
component is added, until full T-RoPE cleanly separates the blocks by elapsed time. 

\begin{figure}[h]
    \centering
    \includegraphics[width=\linewidth,trim={11 4 12 6},clip]{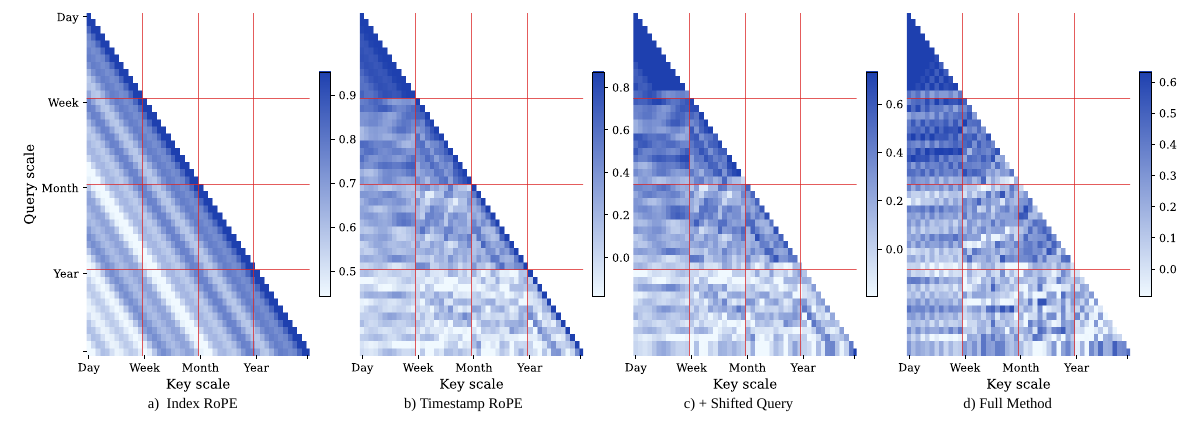}
    \caption{Attention geometry across T-RoPE design stages on a synthetic multiscale sequence. The sequence contains four event blocks whose internal spacing corresponds to day-, week-, month-, and year-scale behavior; red lines mark block boundaries. a) Index RoPE. b) Timestamp RoPE. c) Timestamp RoPE with shifted queries. d) Full T-RoPE with non-stationary keys and learned temporal coefficients. The full method separates the blocks by time geometry rather than only by sequence position.}
    \label{fig:multiscale}
\end{figure}

\end{document}